\documentclass[letterpaper, 10 pt, conference]{ieeeconf}  % Comment this line out if you need a4paper

\IEEEoverridecommandlockouts                              % This command is only needed if 
\usepackage{graphics} % for pdf, bitmapped graphics files
\usepackage{subfig} % for subcaption
\usepackage{wrapfig}

\usepackage{amsthm} % for definition, theorm and etc
\usepackage{amsmath,amssymb}
\usepackage{graphicx}
\usepackage{tabularx}
\usepackage{multirow, makecell}
\usepackage{diagbox}
\usepackage{slashbox}
\usepackage{rotating}
\usepackage{cite}

\usepackage{url}
\usepackage{bm}
\usepackage[table]{xcolor}
\usepackage{hyperref}
\usepackage{siunitx} % degree symbol
\usepackage{booktabs}
\usepackage{cleveref}
\usepackage{mathtools} % for := \coloneqq , and split equation with raisetag. ex means x-height
\usepackage{upgreek} % to use uptau
\usepackage{lipsum} % to use lipsum 

\let\labelindent\relax % not including labelindent (avoid error for overloading \proof)
\usepackage{enumitem} % using enumerate with ref roman option
\usepackage{hhline} % this is for filling the color for the full row, even with the vertical spacing

\usepackage[ruled,vlined]{algorithm2e}

\usepackage[nolist, nohyperlinks]{acronym}
\acrodef{MPC}[MPC]{Model Predictive Control}
\acrodef{QP}[QP]{Quadratic Program}
\acrodef{CBF}[CBF]{Control Barrier Function}

\newcommand{\vx}{{\boldsymbol x}}
\newcommand{\vu}{{\boldsymbol u}}

\newcommand{\vp}{{\boldsymbol p}} % position 
\newcommand{\vv}{{\boldsymbol v}} % velocity
\newcommand{\lieder}{L}
\newcommand{\StateSpace}{\mathcal{X}}
\newcommand{\ControlSpace}{\mathcal{U}}
\newcommand{\RealSpace}{\mathbb{R}}

\newcommand{\calC}{\mathcal{C}} % invariant set
\newcommand{\calK}{\mathcal{K}} 

\newcommand{\calO}{\mathcal{O}} % obstacle
\newcommand{\calS}{\mathcal{S}} % safe set
\DeclareMathOperator*{\argmax}{arg\,max}
\DeclareMathOperator*{\argmin}{arg\,min}

\newtheorem{definition}{Definition}
\newtheorem{theorem}{Theorem}
\theoremstyle{definition}
\newtheorem{remark}{Remark} % no italic in remark

\theoremstyle{definition}
\newtheorem{problem}{Problem}

\theoremstyle{definition}

\theoremstyle{definition}

\theoremstyle{definition}

\makeatother
\DeclareCaptionFont{mysize}{\fontsize{8}{9.6}\selectfont}
\title{\LARGE \bf Policy Library CBF: \\ Finite-Horizon Safety at Runtime via Parallel Rollouts}

\author{Taekyung Kim$^{1}$, Hideki Okamoto$^{2}$, Bardh Hoxha$^{2}$, Georgios Fainekos$^{2}$, Dimitra Panagou$^{1,3}$% <-this % stops a space
\thanks{$^{1}$Department of Robotics, $^{3}$Department of Aerospace Engineering, University of Michigan, Ann Arbor, MI, 48109, USA {\tt\footnotesize taekyung@umich.edu, dpanagou@umich.edu} } 
\thanks{$^{2}$Toyota Motor North America, Research \& Development, Ann Arbor, MI, 48105, USA {\tt\footnotesize <first\_name.last\_name>@toyota.com} } %
}

\begin{document}
\maketitle
\thispagestyle{empty}
\pagestyle{empty}

%%%%%%%%%%%%%%%%%%%%%%%%%%%%%%%%%%%%%%%%%%%%%%%%%%%%%%%%%%%%%%%%%%%%%%%%%%%%%%%%
\begin{abstract}
Safety-critical autonomy in unstructured environments poses significant challenges for online safety certification under evolving constraints. We propose Policy Library Control Barrier Function~(PL-CBF), a runtime safety filter that evaluates a library of fallback policies via parallel finite-horizon rollouts, selects the least invasive safe mode, and enforces safety by solving a quadratic program that minimally modifies a nominal policy. We provide a theoretical analysis based on a finite-horizon language metric over closed-loop behaviors, characterizing policy-library coverage requirements for certifying finite-horizon safety. Simulations on a planar double-integrator (4 states), highway driving with abrupt friction changes using a realistic nonlinear vehicle model (8 states), and 3D quadrotor navigation in crowded dynamic environments (12 states) demonstrate improved safety coverage over single-policy safety filters while retaining millisecond-level runtime. \href{https://github.com/tkkim-robot/plcbf}{\textcolor{red}{[Code]}}  \href{https://www.taekyung.me/plcbf}{\textcolor{red}{[Project Page]}}\footnote{Project page: \href{https://www.taekyung.me/plcbf}{https://www.taekyung.me/plcbf}} \href{https://youtu.be/BceyUSmQlvA}{\textcolor{red}{[Video]}} 

\end{abstract}
%%%%%%%%%%%%%%%%%%%%%%%%%%%%%%%%%%%%%%%%%%%%%%%%%%%%%%%%%%%%%%%%%%%%%%%%%%%%%%%%

% \href{https://anonymous.4open.science/r/plcbf-anonymous}{\textcolor{red}{[Code]}}\footnote{Anonymized github: \href{https://anonymous.4open.science/r/plcbf-anonymous}{https://anonymous.4open.science/r/plcbf-anonymous}}  \href{https://www.notion.so/Policy-Library-CBF-3192d61ec18b80e7845dfd289f3fb28a}{\textcolor{red}{[Project Page]}}\footnote{Anonymized project page: \href{https://www.notion.so/Policy-Library-CBF-3192d61ec18b80e7845dfd289f3fb28a}{https://www.notion.so/Policy-Library-CBF-3192d61ec18b80e7845dfd289f3fb28a}} 

%We provide theoretical analysis based on a finite-horizon language metric over closed-loop behaviors, characterizing policy-library coverage requirements and relating the PL-CBF safe set to the finite-horizon viability kernel.

\section{INTRODUCTION}

The deployment of autonomous systems in unstructured and real-world environments represents one of the most significant engineering challenges of the modern era. In contrast to classical settings where constraints and model parameters are fixed and fully modeled, a deployed robot operates under \emph{perception-limited} information: the safe area is revealed online~\cite{zhou_raptor_2021, agrawal_gatekeeper_2024, kim_visibilityaware_2025}, dynamic obstacles appear and disappear from view~\cite{firoozi_oampc_2025, park_collision_2026}, and operating conditions may change over time~\cite{spielberg_neural_2022, kim_physics_2022, yang_safe_2024}. Consequently, the relevant safety objective is not merely stability with respect to a known invariant set, but the ability to ensure constraint satisfaction in real time under the latest perceived model and constraints.

Formal verification tools such as Hamilton-Jacobi~(HJ) reachability analysis provide strong safety guarantees by computing the viability kernel~\cite{altarovici_general_2013}. However, these approaches typically face the curse of dimensionality and become computationally intractable for the high-dimensional systems and complex constraints that arise in robotics. Control Barrier Functions~(CBFs) provide a more scalable alternative by enforcing pointwise constraints on the control input~\cite{ames_control_2019}. Yet, standard CBF synthesis often presumes access to an explicit barrier function for the full nonlinear system under a fixed constraint description, which is difficult to obtain in general and may not remain valid as perception updates change the effective constraints online~\cite{garg_advances_2024}.

A complementary line of work addresses the feasibility aspect by coupling safety filtering with a certified recovery maneuver. In particular, Backup CBFs~\cite{chen_backup_2021} enforce recursive feasibility by leveraging a designated backup policy and a terminal controlled invariant set, so that the closed loop can be driven to a region where safety can be maintained indefinitely. While conceptually appealing, constructing such a backup policy together with a corresponding terminal invariant set for complex nonlinear dynamics is often difficult in practice, especially when the operational envelope and constraints depend on online perception.

More recently, rollout-based safety filters have been proposed to mitigate the need for offline barrier synthesis. Policy CBFs~(PCBFs) construct finite-horizon safety certificates directly from trajectory rollouts~\cite{so_how_2024,knoedler_safety_2025}, enabling runtime certificate construction without offline value-function computation. However, existing runtime rollout-based filters still hinge on the validity of a \emph{single} designated fallback policy. In unstructured settings, no single maneuver is reliably safe across all perception updates: for example, a stop maneuver can fail under sudden friction loss, and a fixed evasive maneuver can become infeasible when newly detected obstacles or narrow passages invalidate that behavior. This single-fallback dependence induces conservatism and can lead to failures even when an alternative safe maneuver exists.

In this paper, we propose the \emph{Policy Library Control Barrier Function} (PL-CBF), a runtime safety filter that retains multimodal safe maneuvers through a finite library of diverse fallback policies. PL-CBF certifies safety whenever at least one library policy remains safe over a planning horizon under the latest perceived constraints. The final safe control input is obtained by solving a Quadratic Program~(QP) that minimally modifies the nominal command.

The main contributions of this paper are:
\begin{itemize}
    \item \textbf{Policy-library finite-horizon safety filtering:} We propose PL-CBF, a runtime safety filter that certifies finite-horizon safety by evaluating a finite library of candidate fallback policies via parallel rollouts. Among the certified candidates, PL-CBF selects the least restrictive constraint by maximizing the volume of the admissible control set.

    \item \textbf{Theoretical characterization:} Using a finite-horizon language metric over closed-loop rollouts, we establish a completeness condition for finite-horizon safety certification. We also discuss the conceptual connection to classical CBF and forward-invariance results in an idealized infinite-horizon limit.

    \item \textbf{Empirical validation and runtime:} We evaluate PL-CBF on three simulation benchmarks (double integrator, highway driving with abrupt friction changes, and 3D quadrotor navigation with limited sensing and dynamic obstacles), demonstrating improved safety coverage over single-policy safety filters while maintaining millisecond-level runtime.
\end{itemize}

    % \item \textbf{Policy-library safety certificate:} We introduce PL-CBF, a runtime safety filter that certifies finite-horizon safety via a policy-library value function.
    % \item \textbf{Least-invasive multimodal filtering:} We propose a policy-selection rule based on maximizing the volume of the admissible control set under each candidate policy, and we show how including the nominal policy in the library yields less intervention whenever the nominal plan is certifiably safe.

% Using a finite-horizon language metric over closed-loop behaviors, we establish a completeness result for strict safety margins and prove that the PL-CBF safe set converges to the finite-horizon viability kernel as library coverage improves.

%Section~II reviews CBF and PCBF preliminaries. Section~III formalizes the finite-horizon safety problem under time-varying perceived constraints and dynamics. Section~IV presents PL-CBF and the runtime algorithm. Section~V provides theoretical analysis. Section~VI reports simulation results, and Section~VII concludes.
%\clearpage

\section{BACKGROUND INFORMATION \label{sec:preliminaries}}
\subsection{Control Barrier Function \label{sec:cbf}}

Consider the continuous-time control-affine system
\begin{equation} \label{eq:dynamics}
\dot{\vx}(t) = f(\vx(t)) + g(\vx(t))\vu(t),
\end{equation}
where $\vx(t) \in \StateSpace$ and $\vu(t) \in \ControlSpace \subset \RealSpace^{m}$. We assume $\ControlSpace$ is compact and convex, and that $f: \StateSpace \to \RealSpace^{n}$ and $g: \StateSpace \to \RealSpace^{n \times m}$ are locally Lipschitz continuous. Given a continuously differentiable function $h: \StateSpace \to \RealSpace$, define the safe set
\begin{equation}
\calC \coloneqq \{ \vx \in \StateSpace \mid h(\vx) \geq 0 \}.
\end{equation}
Let $\varphi^{\pi}_{t}(\vx_0)$ denote the state at time $t \geq 0$ of the closed-loop system $\dot{\vx}=f(\vx)+g(\vx)\pi(\vx)$ with initial condition $\vx(0)=\vx_0$ under a locally Lipschitz feedback policy $\vu=\pi(\vx)$.

\begin{definition}[CBF \cite{ames_control_2019}]
A continuously differentiable function $h: \StateSpace \to \RealSpace$ is a CBF on $\calC$ for System~\eqref{eq:dynamics} if there exists an extended class $\calK_{\infty}$ function $\alpha$ such that
\begin{equation}
\sup_{\vu \in \ControlSpace} \left[\lieder_f h(\vx) + \lieder_g h(\vx)\vu\right] \geq -\alpha(h(\vx)) \quad \forall \vx \in \calC .
\label{eq:cbf-condition}
\end{equation}
\end{definition}

Given a CBF $h$, define the CBF-admissible control set
\begin{equation}
K_{\textup{cbf}}(\vx) \coloneqq \{ \vu \in \ControlSpace \mid \lieder_{f}h(\vx) + \lieder_{g}h(\vx)\vu + \alpha(h(\vx)) \geq 0 \}. \nonumber
\end{equation}
If $h$ is a CBF, then any locally Lipschitz controller $\vu=\pi(\vx)$ satisfying $\pi(\vx)\in K_{\textup{cbf}}(\vx)$ for all $\vx\in\calC$ renders $\calC$ controlled invariant, equivalently $\vx(0)\in\calC$ implies $\varphi^{\pi}_{t}(\vx(0))\in\calC$ for all $t\ge 0$ \cite{garg_advances_2024}.

% \begin{definition}[Forward Invariance]
% A set $\calS \subseteq \StateSpace$ is \emph{forward invariant} with respect to \todo{System~\eqref{eq:dynamics} under a control law $\vu = \pi(\vx)$} if for every initial state $\vx(0) \in \calS$, the solution $\varphi^{\pi}_{t}(\vx(0)) \in \calS$ for all $t \geq 0$.
% \end{definition}

% Control barrier functions provide sufficient conditions for forward invariance by enforcing constraints on $\vu$.

% Given a CBF~$h$, the set of all control inputs that render $\calC$ forward invariant is defined by the CBF constraint:
% \begin{equation}
% K_{\textup{cbf}}(\vx) = \{ \vu \in \ControlSpace \mid \lieder_{f}h(\vx) + \lieder_{g}h(\vx)\vu + \alpha(h(\vx)) \geq 0 \}. \nonumber
% \end{equation}
% Any Lipschitz continuous controller $\vu \in K_{\textup{cbf}}(\vx)$ renders the set $\calC$ forward invariant.

\subsection{Policy Control Barrier Function \label{sec:pcbf}}

Since computing infinite-horizon rollouts is computationally intractable for real-time applications, a finite-horizon approximation over a horizon $T$ is typically utilized~\cite{knoedler_safety_2025}:
\begin{equation} \label{eq:pcbf_finite}
H^{\pi}_{T}(\vx) := \inf_{t \in [0, T]} h(\varphi_{t}^{\pi}(\vx)).
\end{equation}
The validity of this finite-time approximation as a safety certificate depends on the properties of the policy $\pi$. We formally distinguish between a \textit{backup policy} and a general \textit{fallback policy}. While a fallback policy is merely a candidate control law that attempts to maintain safety, a backup policy~$\pi_{\textup{b}}$ provides a strict guarantee of safety and convergence to a known terminal controlled invariant set~$\Phi_{\pi_{\textup{b}}} \subseteq \calC$:
\begin{definition}[Backup Policy]\label{def:backup_policy}
A policy $\pi_{\textup{b}}$ is a \textit{backup policy} if there exists a known controlled invariant set $\Phi_{\pi_{\textup{b}}} \subseteq \calC$ such that, for any initial state $\vx \in \calC$, the closed-loop trajectory satisfies $\varphi_{t}^{\pi_{\textup{b}}}(\vx) \in \calC$ for all $t \in [0,T]$ and $\varphi_{T}^{\pi_{\textup{b}}}(\vx) \in \Phi_{\pi_{\textup{b}}}$.
\end{definition}
% \begin{definition}[Backup Policy]\label{def:backup_policy}
% A policy $\pi$ is a \textit{backup policy} if there exists a known controlled invariant set $\Phi_{\pi_{\textup{b}}} \subseteq \calC$ such that for any initial state $\vx \in \calC$, the trajectory remains within the safe set during the finite horizon, i.e., $\varphi_{t}^{\pi}(\vx) \in \calC, \forall t \in [0, T]$. Also, the trajectory enters the terminal invariant set at time $T$:
% \begin{equation}
% \varphi_{T}^{\pi}(\vx) \in \Phi_{\pi_{\textup{b}}}.
% \end{equation}
% \end{definition}

If we choose the backup policy (i.e., $\pi=\pi_{\textup{b}}$), the finite-horizon value function coincides with its infinite-horizon counterpart, such that $H^{\pi_{\textup{b}}}_{T}(\vx) = H^{\pi_{\textup{b}}}_{\infty}(\vx) \coloneqq \inf_{t \geq 0} h(\varphi_{t}^{\pi_{\textup{b}}}(\vx))$. This allows the PCBF to recover the recursive feasibility guarantees of Backup CBF approaches. However, in environments where the safe set is revealed online, identifying a valid terminal controlled invariant set $\Phi_{\pi_{\textup{b}}}$ \textit{a priori} is often impossible. In the absence of a backup policy~$\pi_{\textup{b}}$, the PCBF framework proposes the use of a \textit{fallback policy}~\cite{knoedler_safety_2025}, which can essentially be any control policy. While this approach does not ensure convergence to a known terminal controlled invariant set $\Phi_{\pi_{\textup{b}}}$, it provides a guarantee of safety for the duration of the finite horizon.

\textbf{Limitation:} The limitation of the standard PCBF is its reliance on the validity of \textbf{\emph{one}} specific policy $\pi$. 
The safety filter enforces constraints based solely on the closed-loop behavior of this single policy. If the pre-defined policy $\pi$ is mismatched to the current perceived operating conditions and therefore cannot satisfy the safety constraints, then the filter deems the state unsafe even if an alternative admissible maneuver is feasible. This creates unnecessary conservatism and may lead to safety violations that could have been avoided by a different feasible policy.

\section{PROBLEM FORMULATION \label{sec:problem}}

%\subsection{Parameter-Dependent Dynamics \& Perceived Safe Set}
\subsection{Discrete Perception Updates and Perceived Safe Set}

Let $\{t_k\}_{k\in\mathbb{N}}$ denote the discrete sensing-update times. We define the index map
\begin{equation}
\kappa:\RealSpace_{\ge 0}\to\mathbb{N}, \quad \kappa(t)=k \ \text{such that}\ t_k \le t < t_{k+1},
\end{equation}
which returns the latest sensing update available at continuous time $t$.

% We consider a control-affine system whose dynamics depend on a parameter vector $\vtheta \in \Theta$:
% \begin{equation}\label{eq:dynamics_theta}
% \dot{\vx}(t) = f(\vx(t), \vtheta) + g(\vx(t), \vtheta)\vu(t).
% \end{equation}

% $\vtheta$ represents system or environment parameters that may change during execution. Let $t_k$, $k \in \mathbb{N}$, denote the perception update times, and let $\vtheta_k$ denote the parameter value available to the controller at time $t_k$. \todo{For example, such parameters need to be estimated online through an algorithm~\cite{yaghoubi_riskbounded_2021, kim_physics_2022, spielberg_neural_2022}.} Here, parameter estimation is not part of the present work; we assume that the parameters $\vtheta_k$ are known perfectly at runtime, while they are not known \textit{a priori} before deployment.

At each update time $t_k$, the robot constructs a \emph{perceived safe set} $\calC_k \subset \StateSpace$ from the latest environment estimate (e.g., LiDAR-based obstacle detections and occupancy mapping). We represent this set as the $0$-superlevel set of a continuously differentiable function $h_k: \StateSpace \to \RealSpace$:
\begin{equation}
\calC_k = \{ \vx \in \StateSpace \mid h_k(\vx) \geq 0 \}.
\end{equation}
We do not impose a monotonicity assumption on the sequence $\{\calC_k\}_{k \in \mathbb{N}}$: newly observed free space may enlarge the certified region, whereas newly detected obstacles or occlusions may invalidate regions previously regarded as safe.

Using the same notation $\varphi^{\pi}_{(\cdot)}(\cdot)$ introduced in \autoref{sec:cbf}, the look-ahead state at time $t+\tau$ from the current state $\vx(t)$ under policy $\pi$ is $\varphi^{\pi}_{\tau}(\vx(t))$, for all $\tau\ge 0$.

\subsection{Problem Statement}

Our primary objective is to design a safety filter that tracks a desired nominal policy $\pi_{\textup{nom}}$ while ensuring that the system remains within the perceived safe set. Because safety must be certified online using only the current perceived safe set~$\calC_k$ available over the sensing interval indexed by $k=\kappa(t)$, infinite-horizon safety certification is generally not theoretically possible without strong assumptions in this setting. We therefore consider a finite-horizon safety problem over a planning horizon $T$.

\begin{problem}[Finite-Horizon Safety]\label{prob:finite_safety}
At time $t$, let $k=\kappa(t)$. Given the current perceived safe set $\calC_k$, synthesize a control policy $\pi$ close to $\pi_{\textup{nom}}$ such that
\begin{equation}
\varphi_{\tau}^{\pi}\big(\vx(t)\big) \in \calC_k, \quad \forall \tau \in [0, T].
\end{equation}
\end{problem}
Solving \autoref{prob:finite_safety} recursively across sensing updates allows the robot to navigate safely through the currently observed environment while continuously incorporating new perceived safety constraints $\{h_k\}$.

\begin{remark}[Connections to Model Predictive Control (MPC)]
The finite-horizon safety formulation in \autoref{prob:finite_safety} is closely related to MPC practices. Theoretically, recursive feasibility in finite-horizon MPC often requires a terminal constraint set. However, in environments where such a terminal set is difficult to synthesize online, practitioners often omit terminal constraints as the resulting MPC remains highly effective in practice~\cite{camacho_model_2004, re_automotive_2010}. Our formulation which drops the requirement for certifying infinite-time invariance is analogous to dropping the terminal constraint in MPC. However, a key distinction lies in the computational objective. While MPC typically requires solving a computationally expensive optimization problem online, our goal is to design a safety filter that is computationally efficient to construct a safety certificate at runtime.
\end{remark}

\section{POLICY LIBRARY CBF \label{sec:method}}
\subsection{The Policy Library CBF Formulation}

To overcome the limitations of relying on a single fallback maneuver, we propose the Policy Library Control Barrier Function~(PL-CBF). Instead of relying on a single backup or fallback policy, we define a library of $K$ candidate fallback policies
\begin{equation}
\Pi = \{ \pi_1, \pi_2, \dots, \pi_K \}.
\end{equation}

We then define the finite-horizon \emph{policy library value function}\footnote{For the remainder of this paper, we omit the subscript $k$ and denote the current perception snapshot $h_k$ simply by $h$ when the meaning is clear.}
\begin{equation}\label{eq:PL-CBF_value_finite}
\begin{aligned}
H_T^{\Pi}(\vx) & \coloneqq \max_{\pi_i \in \Pi} \left\{ H_T^{\pi_i}(\vx) \right\} \\
& = \max_{\pi_i \in \Pi} \left\{ \inf_{\tau \in [0,T]} h\left(\varphi_{\tau}^{\pi_i}(\vx) \right) \right\}.
\end{aligned}
\end{equation}
Conceptually, if any policy in the library~$\Pi$ can maintain nonnegative constraint values over $[0,T]$, then $\vx$ is considered safe for horizon $T$. This induces the \emph{PL-CBF safe set}
% \begin{equation}\label{eq:PL-CBF_safeset}
% \begin{aligned}
% \calS_{T}^{\Pi} \coloneqq & \, \{ \vx \in \StateSpace \mid H^{\Pi}_{T}(\vx) \geq 0 \} \\
% = & \bigcup_{\pi_i \in \Pi} \{ \vx \in \StateSpace \mid H^{\pi_i}_{T}(\vx) \geq 0 \}.
% \end{aligned}
% \end{equation}
\begin{equation}\label{eq:PL-CBF_safeset}
\calS_{T}^{\Pi} \coloneqq \bigcup_{\pi_i \in \Pi} \{ \vx \in \StateSpace \mid H^{\pi_i}_{T}(\vx) \geq 0 \}.
\end{equation}
We emphasize that the explicit construction of $\calS_{T}^{\Pi}$ is used solely for theoretical analysis to characterize how the certified safe set evolves as the policy library $\Pi$ changes. In the runtime algorithm, we \textbf{\emph{do not}} compute the boundary of $\calS_{T}^{\Pi}$ or perform operations over the entire state space $\StateSpace$. At runtime, we only evaluate $H^{\Pi}_{T}(\vx)$ at the current system state $\vx$. We defer a discussion of the (idealized) infinite-horizon limit and its relation to CBFs and forward invariance to \autoref{sec:relation}.

\begin{algorithm}[t]\footnotesize
\caption{\small{Policy Library CBF QP}}
\label{alg:plcbf}
\SetAlgoLined
\KwIn{Current time, state, and perceived constraint function $\{ t, \vx, h_k\}$, Nominal input $\vu_{\textup{nom}}$, Policy library $\Pi$}
\KwOut{Safe control input $\vu^{\star}$}
\BlankLine
\For{$\pi_i \in \Pi$}{ \tcp*[h]{Parallelizable} \\
    Rollout $\pi_i$ to compute closed-loop trajectory $\varphi_{(\cdot)}^{\pi_i}(\vx(t))$ \\
    Compute $H_{T}^{\pi_i}(\vx)=\inf_{\tau\in[0,T]} h_k(\varphi_{\tau}^{\pi_i}(\vx(t)))$
}
\BlankLine

Select least invasive policy $\pi_{i^{\star}} \leftarrow \max_{\pi_i \in \Pi} \, \operatorname{Vol} \left( K_{\textup{cbf}}(\vx; \pi_i) \right)$ \\
Compute the gradient $\nabla  {H_{T}^{\pi_{i^{\star}}} (\vx)}$ using \texttt{autodiff} \\
$\vu^{\star} \leftarrow$ solve QP \eqref{eq:mp_cbf_qp} using the constraint from $\pi_{i^{\star}}$

\Return $\vu^{\star}$
\end{algorithm}
\subsection{Safety Filter Synthesis}

We synthesize the safety filter using a QP that minimally deviates from the nominal control $\vu_{\textup{nom}}=\pi_{\textup{nom}}(\vx)$. To minimize the invasiveness of the safety filter, we propose selecting the active policy based on the volume of the admissible control space. While standard approaches might select the policy yielding the highest safety value, a higher safety margin does not necessarily imply a less restrictive constraint on the control input. For each policy $\pi_i \in \Pi$, the set of safe control inputs is defined by the corresponding CBF constraint:
\begin{equation}
\begin{split}
\raisetag{2.5ex}
K_{\textup{cbf}}(\vx; \pi_i) = \{ \vu \in \ControlSpace \mid \nabla H^{\pi_i}_{T} & (\vx)^{\top} (f(\vx) + g(\vx)\vu) \\
& \geq -\alpha(H^{\pi_i}_{T}(\vx)) \}.
\end{split}
\end{equation}
We select the least invasive policy index $i^{\star}$ that maximizes the volume of this admissible set:
\begin{equation}
\begin{split}
i^{\star} &= \argmax_{i\in\mathcal{I}_{\textup{safe}}(\vx)} \operatorname{Vol} \left( K_{\textup{cbf}}(\vx;\pi_i)\right), \\
\mathcal{I}_{\textup{safe}}(\vx) & \coloneqq \{\, i\in\{1,\dots,K\}\mid H^{\pi_i}_T(\vx) > 0 \,\} .
\end{split}
\end{equation}
The final control input is obtained by solving the following QP (see \autoref{alg:plcbf}):
\begin{equation} \label{eq:mp_cbf_qp}
\begin{split}
\raisetag{6.0ex}
\vu^{\star} &= \argmin_{\vu \in \ControlSpace} \quad \|\vu - \vu_{\textup{nom}}\|_2^2 \\
\textrm{s.t.} \quad \nabla & {H_{T}^{\pi_{i^{\star}}} (\vx)}^{\top} (f(\vx) + g(\vx)\vu) \geq -\alpha(H_{T}^{\pi_{i^{\star}}}(\vx)).
\end{split}
\end{equation}
To construct the constraint in \eqref{eq:mp_cbf_qp}, we require the gradient of the value function, i.e., $\nabla H_{T}^{\pi_{i^{\star}}}(\vx)$. Since the value function is defined by the rollout of nonlinear dynamics, analytical gradients are generally unavailable. Therefore, we utilize automatic differentiation through the dynamics to compute the gradient efficiently at runtime~\cite{jax2018github}. To mitigate gradient errors arising from discrete-time integration, cubic spline interpolation can be applied to the rollout trajectory as discussed in \cite{knoedler_safety_2025}.

\subsection{Parallel Computation and Runtime Efficiency}

A significant advantage of the proposed PL-CBF framework is its computational structure which is naturally amenable to parallelization. Unlike HJ reachability analysis which suffers from the curse of dimensionality due to spatial gridding, our method relies on independent trajectory rollouts. The computation of the closed-loop trajectory $\varphi_{(\cdot)}^{\pi_i}(\vx)$ and policy value function $H^{\pi_i}_{T}(\vx)$ for each policy $\pi_i \in \Pi$ is decoupled and can be executed simultaneously on parallel computing hardware such as GPUs.

This parallel architecture allows the safety filter to scale efficiently with the size of the policy library $K$. By constructing the safety certificate online at runtime, PL-CBF avoids offline synthesis tied to a single fixed model or constraint realization. Its effectiveness therefore depends on the suitability and diversity of the policy library $\Pi$ for the runtime scenarios encountered.

% \begin{algorithm}[t]\footnotesize
% \caption{\small{Policy CBF QP}}
% \label{alg:plcbf}
% \SetAlgoLined
% \KwIn{Current state, perceived dynamics parameters, and constraint function $\{ \vx, \vtheta_k, h_k\}$, Nominal input $\vu_{\textup{nom}}$, fallback policy $\pi$}
% \KwOut{Safe control input $\vu^{\star}$}
% \BlankLine

% Rollout $\pi$ to compute closed-loop trajectory $\varphi^{\pi, \vtheta_k}(\vx)$ \\
% Compute safety value $H^{\pi}_{T}(\vx)$ via \eqref{eq:pcbf_finite}
% Compute the gradient $\nabla  {H_{T}^{\pi} (\vx)}$ using \texttt{autodiff} \\
% $\vu^{\star} \leftarrow$ solve QP \eqref{eq:mp_cbf_qp} using the constraint from $\pi$

% \Return $\vu^{\star}$
% \end{algorithm}

\section{THEORETICAL ANALYSIS \label{sec:analysis}}
\subsection{Language Metric and Approximate Bisimulation}

% In the PL-CBF framework, the relevant notion of ``coverage'' is not geometric coverage of the state space, but \emph{behavioral} coverage of the closed-loop trajectories induced by different fallback policies. 
% To formalize trajectory-level similarity, we borrow the notion of \emph{languages} and \emph{approximate (bi)simulation} metrics from the verification literature on metric transition systems~\cite{girard_approximation_2007, girard_approximate_2011}.

We formalize similarity between finite-horizon closed-loop trajectories induced by different fallback policies using language and approximate (bi)simulation metrics from the verification literature on metric transition systems~\cite{girard_approximation_2007,girard_approximate_2011}.

The finite-horizon trajectory is $\tau\mapsto \varphi_{\tau}^{\pi}(\vx)$ for $\tau\in[0,T]$. In the general framework, the \emph{language} of a system is the set of external trajectories it can generate, and \emph{language metrics} are Hausdorff distances between such trajectory sets induced by a trajectory metric~\cite[Def.~2]{girard_approximate_2011}.

\begin{definition}[Finite-Horizon Language Metric \footnote{A closely related finite-horizon construction for timed languages is referred to as a \emph{timed language metric} in~\cite{deng_verification_2016}.}]
Given an initial condition $\vx$ and a horizon $T$, we define the finite-horizon language metric between two policies $\pi_i$ and $\pi_j$ by
\begin{equation}
d_{\vx}(\pi_i,\pi_j) \coloneqq \sup_{\tau \in [0,T]} \, \|\varphi_{\tau}^{\pi_i}(\vx)-\varphi_{\tau}^{\pi_j}(\vx)\|_2 .
\end{equation}
\end{definition}

% The language metric $d_{\vx}$ upper bounds the worst-case state deviation between two policy rollouts over $[0,T]$. This is precisely the quantity needed to transfer finite-horizon safety margins through the Lipschitz continuity of the constraint function (see \autoref{sec:completeness}). Moreover, the language, simulation, and bisimulation metrics satisfy a standard hierarchy~\cite[Thm~2]{girard_approximate_2011}. 
% But in particular, for deterministic labeled transition systems (corresponding here to fixing the closed-loop policy), the bisimulation metric coincides with the language metric~\cite[Thm~10]{girard_approximation_2007}. 
% Accordingly, in our problem, bounds expressed in terms of $d_{\vx}(\cdot,\cdot)$ can also be interpreted through the lens of approximate (bi)simulation. 

The language metric $d_{\vx}$ upper bounds the worst-case state deviation between two policy rollouts over $[0,T]$, enabling transfer of finite-horizon safety margins via Lipschitz continuity of $h$ (see \autoref{sec:completeness}). For deterministic closed-loop systems, the bisimulation metric coincides with the language metric~\cite[Thm~10]{girard_approximation_2007}. Accordingly, in our problem, bounds expressed in terms of $d_{\vx}(\cdot,\cdot)$ can also be interpreted through the lens of approximate (bi)simulation. 

\begin{definition}[Policy-Library Approximate Precision]\label{def:approximation_precision}
Let $\Pi_{\textup{all}}$ denote the set of admissible policies that are well-posed for System~\eqref{eq:dynamics} and let $\Pi \subset \Pi_{\textup{all}}$ be a finite library. The approximate precision of $\Pi$ at $\vx$ under the language (and bisimulation) metric is defined by the directed Hausdorff distance:
\begin{equation}
\delta_{\vx}(\Pi) \coloneqq \sup_{\pi^{\star} \in \Pi_{\textup{all}}} \, \left[ \min_{\pi_k \in \Pi} \, d_{\vx}(\pi^{\star}, \pi_k) \right].
\end{equation}
\end{definition}

Intuitively, for any admissible policy $\pi^{\star} \in \Pi_{\textup{all}}$, there exists a policy $\pi_k \in \Pi$ whose closed-loop trajectory stays within $\delta_{\vx}(\Pi)$, measured by $d_{\vx}$, of the trajectory induced by $\pi^{\star}$ over the horizon $[0,T]$. Therefore, smaller $\delta_{\vx}(\Pi)$ corresponds to finer coverage of admissible closed-loop behaviors by the finite policy library.

\subsection{Completeness \label{sec:completeness}}

\begin{figure*}[t]
    \centering
    \includegraphics[width=0.95\textwidth]{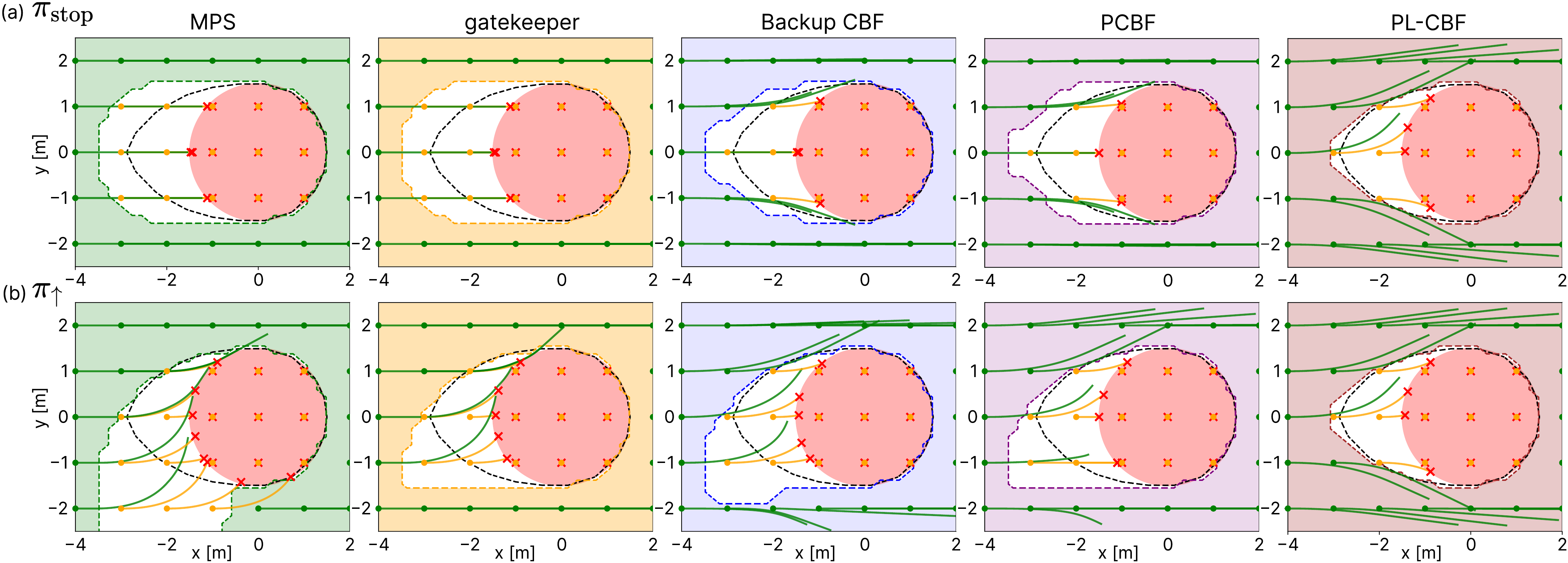}
    \caption{Recovered safe sets for the double-integrator example on the slice $(v_x,v_y)=(2,0)$. The black dashed line denotes the viability kernel computed via HJ reachability on the full 4-dim state space. For each 1 m $\times$ 1 m grid of initial positions, we simulate the closed-loop system under each safety filter: green trajectories are safe, while yellow trajectories correspond to unsafe or infeasible rollouts. (a) Top row: stopping fallback $\pi_{\textup{stop}}$. (b) Bottom row: evasive fallback $\pi_{\uparrow}$. PL-CBF uses the same library in both rows, $\Pi=\{\pi_{\textup{nom}},\pi_{\textup{stop}},\pi_{\uparrow},\pi_{\downarrow}\}$.}
    \label{fig:di_safeset}
\end{figure*}

Safety filters based on a backup~\cite{chen_backup_2021,bastani_safe_2021,agrawal_gatekeeper_2024} or a fallback policy~\cite{so_how_2024,knoedler_safety_2025} certify safety by validating a \emph{particular} closed-loop controller. Such certificates can be effective when the perceived constraints function~$h$ is fixed. In our setting, however, the perception information $h_k$ updates online, so a controller that is safe at time $t_k$ may fail to remain safe after the next update. This motivates the central question addressed by PL-CBF: can a finite library of fallback policies certify safety whenever there exists at least one admissible policy that is safe over the horizon under the current perception information $h_k$?

We now provide a sufficient condition under which PL-CBF certifies finite-horizon safety whenever a feasible safety maneuver exists for the current snapshot. This result can be interpreted as a \emph{completeness} guarantee: if there exists an admissible policy that is strictly safe with a nonzero margin, and the policy library approximates admissible closed-loop trajectories with sufficiently low $\delta_{\vx}(\Pi)$, then the library contains a policy that PL-CBF can certify as safe.

We define the \textit{clearance} of a trajectory as its minimum safety margin:
\begin{definition}[Trajectory Clearance]
The clearance $\gamma(\sigma)$ of a trajectory $\sigma$ over $[0,T]$ with respect to the constraint function $h$ is
\begin{equation}
\gamma(\sigma) \coloneqq \inf_{\tau \in [0, T]} h(\sigma(\tau)).
\end{equation}
In particular, $\gamma(\varphi^{\pi}_{(\cdot)}(\vx)) = H^{\pi}_{T}(\vx)$ by~\eqref{eq:PL-CBF_value_finite}.
\end{definition}

\begin{theorem}[Completeness of a Finite-Horizon Safe Policy]
\label{thm:completeness}
Consider a system with a constraint function $h: \StateSpace \to \RealSpace$ that is $L_h$-Lipschitz.
Let $\Pi$ be a finite policy library with approximation precision $\delta_{\vx}(\Pi)$ under the finite-horizon language metric $d_{\vx}(\cdot,\cdot)$.
Suppose there exists a safe policy $\pi^{\star} \in \Pi_{\textup{all}}$ that renders the system strictly safe from $\vx$ over $[0,T]$ with clearance $\gamma^{\star} = \gamma(\varphi^{\pi^{\star}}_{(\cdot)}(\vx)) > 0$. If 
\begin{equation}
\delta_{\vx}(\Pi) < \gamma^{\star}/L_h ,
\end{equation}
then there exists $\pi_k \in \Pi$ such that $H_{T}^{\pi_k}(\vx) > 0$.
\end{theorem}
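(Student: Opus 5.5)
The argument transfers the clearance of $\pi^{\star}$ to a nearby library policy through the Lipschitz continuity of $h$. The only subtlety is that $\delta_{\vx}(\Pi)$ is defined as a supremum over $\Pi_{\textup{all}}$ of a minimum over the finite set $\Pi$.

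\textbf{Step 1: select a library policy close to $\pi^{\star}$.} Since $\pi^{\star}\in\Pi_{\textup{all}}$, Definition~\ref{def:approximation_precision} gives
\begin{equation*}
\min_{\pi_k\in\Pi} d_{\vx}(\pi^{\star},\pi_k)\le \delta_{\vx}(\Pi).
\end{equation*}
Because $\Pi$ is finite, this minimum is attained. We therefore fix some $\pi_k\in\Pi$ with $d_{\vx}(\pi^{\star},\pi_k)\le\delta_{\vx}(\Pi)<\gamma^{\star}/L_h$. If $L_h=0$, then $h$ is constant, $h\equiv\gamma^{\star}>0$, and the claim is trivial; so assume $L_h>0$.

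\textbf{Step 2: pointwise bound along the trajectory.} For every $\tau\in[0,T]$, the definition of the language metric gives
\begin{equation*}
\|\varphi^{\pi_k}_{\tau}(\vx)-\varphi^{\pi^{\star}}_{\tau}(\vx)\|_2\le d_{\vx}(\pi^{\star},\pi_k).
\end{equation*}
Lipschitz continuity of $h$ then yields
\begin{equation*}
h(\varphi^{\pi_k}_{\tau}(\vx))\ge h(\varphi^{\pi^{\star}}_{\tau}(\vx))-L_h\, d_{\vx}(\pi^{\star},\pi_k).
\end{equation*}
By the definition of clearance, $h(\varphi^{\pi^{\star}}_{\tau}(\vx))\ge\gamma^{\star}$ for all $\tau\in[0,T]$.

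\textbf{Step 3: take the infimum.} The right-hand side $\gamma^{\star}-L_h d_{\vx}(\pi^{\star},\pi_k)$ is independent of $\tau$. Taking the infimum over $\tau\in[0,T]$ therefore gives
\begin{equation*}
H^{\pi_k}_T(\vx)\ge \gamma^{\star}-L_h\, d_{\vx}(\pi^{\star},\pi_k)\ge \gamma^{\star}-L_h\,\delta_{\vx}(\Pi)>0,
\end{equation*}
where the final strict inequality is exactly the hypothesis $\delta_{\vx}(\Pi)<\gamma^{\star}/L_h$. This proves the claim.

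\textbf{Main obstacle.} The step needing care is strictness. The infimum over $\tau$ cannot turn the strict inequality into a non-strict one, because the lower bound from Step 2 is uniform in $\tau$ and the margin $\gamma^{\star}-L_h\delta_{\vx}(\Pi)$ is a fixed positive constant. Finiteness of $\Pi$ guarantees that the minimizer in Step 1 exists. Even without attainment, the strict gap $\delta_{\vx}(\Pi)<\gamma^{\star}/L_h$ leaves room to pick some $\pi_k$ with $d_{\vx}(\pi^{\star},\pi_k)<\gamma^{\star}/L_h$, and the same bound applies. Finally, the trajectories must be defined on all of $[0,T]$, which follows from well-posedness of the policies in $\Pi_{\textup{all}}$ and $\Pi\subset\Pi_{\textup{all}}$.
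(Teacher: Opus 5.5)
Your proof is correct and follows essentially the same route as the paper: pick $\pi_k\in\Pi$ with $d_{\vx}(\pi^{\star},\pi_k)\le\delta_{\vx}(\Pi)$, transfer the margin pointwise via the $L_h$-Lipschitz bound on $h$, and take the infimum over $\tau\in[0,T]$ to get $H^{\pi_k}_T(\vx)\ge\gamma^{\star}-L_h\delta_{\vx}(\Pi)>0$. Your extra remarks, namely that the minimum over the finite library is attained and the degenerate case $L_h=0$, are harmless refinements that the paper leaves implicit.
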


\begin{proof}
By the definition of $\delta_{\vx}(\Pi)$, there exists a policy $\pi_k \in \Pi$ such that
\begin{equation}
d_{\vx}(\pi^{\star},\pi_k) 
= \sup_{\tau \in [0, T]} \|\varphi_{\tau}^{\pi^{\star}}(\vx) - \varphi_{\tau}^{\pi_k}(\vx)\|_2 
\leq \delta_{\vx}(\Pi). \nonumber
\end{equation}
Hence, for all $\tau \in [0,T]$, $\|\varphi_{\tau}^{\pi^{\star}}(\vx) - \varphi_{\tau}^{\pi_k}(\vx)\|_2 \le \delta_{\vx}(\Pi)$. Since $h$ is $L_h$-Lipschitz, for any $\tau \in [0,T]$,
\begin{equation}
\begin{aligned}
& \big|h(\varphi_{\tau}^{\pi^{\star}}(\vx)) - h(\varphi_{\tau}^{\pi_k}(\vx))\big| \\
\le \quad &  L_h \, \|\varphi_{\tau}^{\pi^{\star}}(\vx) - \varphi_{\tau}^{\pi_k}(\vx)\|_2 \le L_h \, \delta_{\vx}(\Pi). \nonumber
\end{aligned}
\end{equation}
Rearranging gives, $\forall \tau \in [0,T]$,
\begin{equation}\label{eq:thm1_eq1}
h(\varphi_{\tau}^{\pi_k}(\vx)) \ge h(\varphi_{\tau}^{\pi^{\star}}(\vx)) - L_h \delta_{\vx}(\Pi).
\end{equation}
Taking the infimum over $\tau \in [0,T]$ in \eqref{eq:thm1_eq1} yields
\begin{equation}
\begin{aligned}
& H_{T}^{\pi_k}(\vx) = \inf_{\tau \in [0,T]} h(\varphi_{\tau}^{\pi_k}(\vx)) \\
& \ge \inf_{\tau \in [0,T]} h(\varphi_{\tau}^{\pi^{\star}}(\vx)) - L_h \delta_{\vx}(\Pi) = \gamma^{\star} - L_h \delta_{\vx}(\Pi). \nonumber
\end{aligned}
\end{equation}
If $\delta_{\vx}(\Pi) < \gamma^{\star}/L_h$, then $\gamma^{\star} - L_h \delta_{\vx}(\Pi) > 0$, and therefore $H_{T}^{\pi_k}(\vx) > 0$. \nonumber
\end{proof}

% \subsection{Convergence to Viability Kernel}
% \input{_V.Analysis/c_convergence}
\subsection{Relation to CBFs and Forward Invariance \label{sec:relation}}

Although PL-CBF uses a finite-horizon rollout value, the discussion in this subsection is purely conceptual and considers the idealized infinite-horizon limit.

For a fixed policy $\pi$, define the infinite-horizon policy value function as $ H_{\infty}^{\pi}(\vx) \coloneqq \inf_{t \geq 0} h(\varphi_{t}^{\pi}(\vx)) $.
By definition, $H_{\infty}^{\pi}(\vx) \leq h(\vx)$ for all $\vx$. Moreover, $H_{\infty}^{\pi}$ is non-decreasing along the closed-loop trajectory induced by $\pi$, satisfying the monotonicity condition:
\begin{equation}
\nabla H_{\infty}^{\pi}(\vx)^{\top}\left(f(\vx)+g(\vx)\pi(\vx)\right) \geq 0.
\end{equation}
These properties are equivalent to $H_{\infty}^{\pi}$ satisfying a Hamilton-Jacobi variational inequality in the viscosity sense~\cite{altarovici_general_2013}:
\begin{equation}
\begin{aligned}
\min \Big\{\, & h(\vx)-H_{\infty}^{\pi}(\vx), \\
& \nabla H_{\infty}^{\pi}(\vx)^{\top}\left(f(\vx)+g(\vx)\pi(\vx)\right) \Big\} = 0 .
\end{aligned}
\end{equation}
Consequently, $H_{\infty}^{\pi}$ is a CBF on the set $\calS^{\pi}_{\infty}$, where $\calS^{\pi}_{\infty} \coloneqq \{\vx \in \StateSpace \mid H_{\infty}^{\pi}(\vx)\ge 0\}$~\cite[Thm~1]{so_how_2024}. 
%Note that $h$ is also a CBF on $\calS^{\pi}_{\infty}$ by inspection, but not necessarily on $\calC$. 

For a policy library $\Pi$, define $H_{\infty}^{\Pi}(\vx) \coloneqq \max_{\pi_i \in \Pi} H_{\infty}^{\pi_i}(\vx)$. Since the pointwise maximum of multiple CBFs is a CBF by utilizing nonsmooth analysis and generalized gradients in \cite{glotfelter_boolean_2018}, $H_{\infty}^{\Pi}$ is also a CBF. 

Finally, if the library contains a backup policy~$\pi_{\textup{b}}$ (\autoref{def:backup_policy}), i.e., $\pi_{\textup{b}} \in \Pi$, then the PL-CBF safety filter recovers the infinite-time safety (forward invariance) and recursive feasibility properties associated with backup-based safety filters.

\subsection{Inclusion of Nominal Policy}

Many backup-based safety filters often exhibit unnecessary conservatism by modifying the nominal control even when the system is operating safely~\cite{chen_backup_2021, so_how_2024, knoedler_safety_2025, bastani_safe_2021}. This phenomenon, which we refer to as ``safety evaluation on backup"~\cite{kim_backupbased_2026}, occurs because safety is typically evaluated based on the feasibility of a specific backup maneuver rather than the nominal plan itself. If the backup policy is inherently conservative or currently ill-suited to the environment, the filter may restrict the nominal controller unnecessarily to ensure the backup remains feasible.

The PL-CBF addresses this by explicitly including the nominal policy $\pi_{\textup{nom}}$ in the library $\Pi$. Consider a scenario where pre-defined fallback policies are only marginally feasible or unsafe, yet the nominal plan remains safe, and $\pi_{\textup{nom}} \in \Pi$. In this case, the PL-CBF selects $\pi_{\textup{nom}}$ as the least invasive policy. The resulting safety constraint $K_{\textup{cbf}}(\vx; \pi_{\textup{nom}})$ naturally contains the nominal control input $\vu_{\textup{nom}} = \pi_{\textup{nom}}(\vx)$. Consequently, the solution to the QP in \eqref{eq:mp_cbf_qp} yields $\vu^{\star} = \vu_{\textup{nom}}$, resulting in zero intervention.

\begin{figure*}[t]
    \centering
    \includegraphics[width=0.99\textwidth]{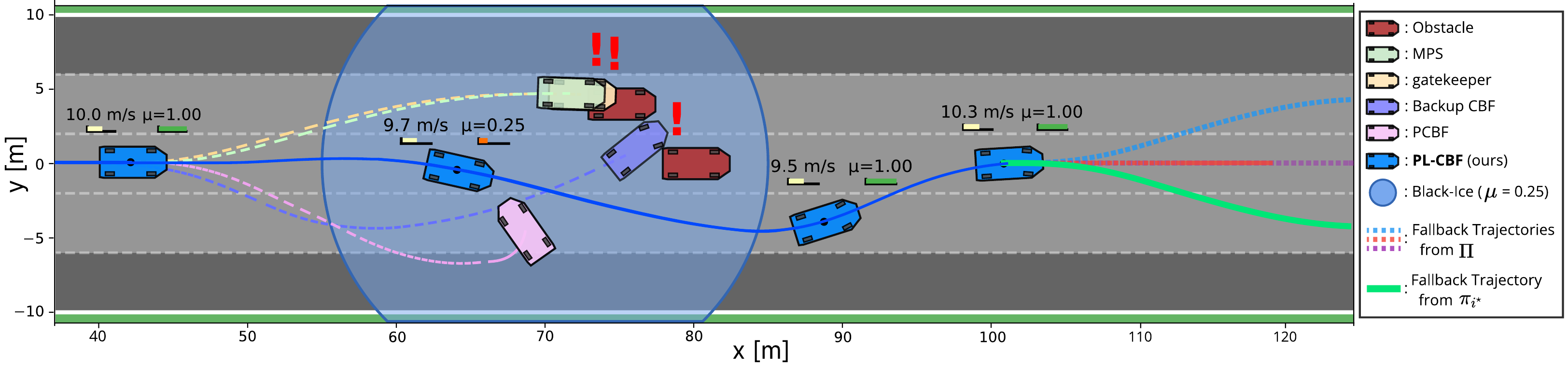}
    \caption{Representative trajectory from the highway driving experiment in \autoref{tab:highway} with MPCC reference speed $v_{\textup{ref}}=10$ m/s. The four single-fallback baselines are shown with the left lane-change controller $\pi_{\textup{left}}$: MPS, \texttt{gatekeeper}, and Backup CBF collide with the obstacle, while PCBF undergoes an oversteer-induced spin after entering the black-ice patch ($\mu$ = 0.25). PL-CBF safely traverses the low-friction segment by selecting from $\Pi=\{\pi_{\textup{stop}},\pi_{\textup{left}},\pi_{\textup{right}},\pi_{\textup{nom}}\}$. Near $x\approx100$ m, dashed lines show candidate rollouts from $\Pi$; the purple dashed curve is the MPCC nominal rollout, and the green curve is the selected rollout from $\pi_{i^\star}$.}
    \label{fig:drift_traj}
\end{figure*}

\section{RESULTS \label{sec:results}}

We evaluate PL-CBF as a runtime safety filter in three simulation case studies of increasing difficulty.
We compare against four representative safety filters using a single fallback (or backup) policy: Model Predictive Shielding (MPS)~\cite{bastani_safe_2021}, \texttt{gatekeeper}~\cite{agrawal_gatekeeper_2024}, Backup CBF~\cite{chen_backup_2021}, and the Policy CBF (PCBF; \autoref{sec:pcbf})~\cite{knoedler_safety_2025}. Across all experiments, methods share the same nominal controller and differ only in (i) the safety-filter's algorithm and (ii) the fallback policy. All experiments were conducted on a MacBook Air with the Apple M4 CPU.

\subsection{Safe Set Under Multiple Fallback Policies}

We first illustrate set-wise behavior in a planar double-integrator example where a reference viability kernel can be computed. Let $\vx=[\vp^\top\ \vv^\top]^\top\in\RealSpace^4$ with $\vp=[x\ y]^\top$ and $\vv=[v_x\ v_y]^\top$, and dynamics
\begin{equation}\label{eq:di_dyn}
\dot{\vp}=\vv,\quad \dot{\vv}=\vu,\quad \vu\in[-a_{\max},a_{\max}]^2,
\end{equation}
with $a_{\max}=0.5~\textup{m/s}^2$. The unsafe set is a disk $\calO=\{\vx\mid \|\vp\|_2\le r_{\mathrm{obs}}\}$, and we use $h(\vx)=\|\vp\|_2-r_{\mathrm{obs}}$, yielding $\calC=\{\vx\mid h(\vx)\ge 0\}$. Since the state dimension is $4$, we can compute the viability kernel for~\eqref{eq:di_dyn} using HJ reachability~\cite{bansal_hamiltonjacobi_2017} and visualize it on the velocity slice $(v_x,v_y)=(2,0)$ in m/s.

To highlight the dependence of single-policy certificates on the chosen maneuver, we consider two simple fallbacks: a stopping policy $\pi_{\textup{stop}}$ implemented as component-wise saturated full deceleration until rest, i.e., $-a_{\max}\,\mathrm{sign}(\vv)$, and a ``turn-up'' evasive policy $\pi_{\uparrow}$ that applies saturated acceleration in $+y$. Fig.~\ref{fig:di_safeset} compares the certified safe sets obtained under these two fallback choices. PL-CBF uses the same library in both rows, $\Pi=\{\pi_{\textup{nom}},\pi_{\textup{stop}},\pi_{\uparrow},\pi_{\downarrow}\}$, where $\pi_{\downarrow}$ applies saturated acceleration in $-y$.

Importantly, a single-policy method may recover a different safe set if a different fallback is chosen, but each fixed fallback certifies only the states compatible with that particular maneuver. As a result, replacing the stopping fallback with a different maneuver may recover states that require that specific behavior while losing those that are safe only under stopping. PL-CBF avoids this tradeoff by evaluating a finite library online and certifying safety whenever at least one library policy remains feasible over the horizon, yielding the largest safe set.

\subsection{Highway Driving with Sudden Friction Change}

\begin{figure*}[t]
    \centering
    \includegraphics[width=0.99\textwidth]{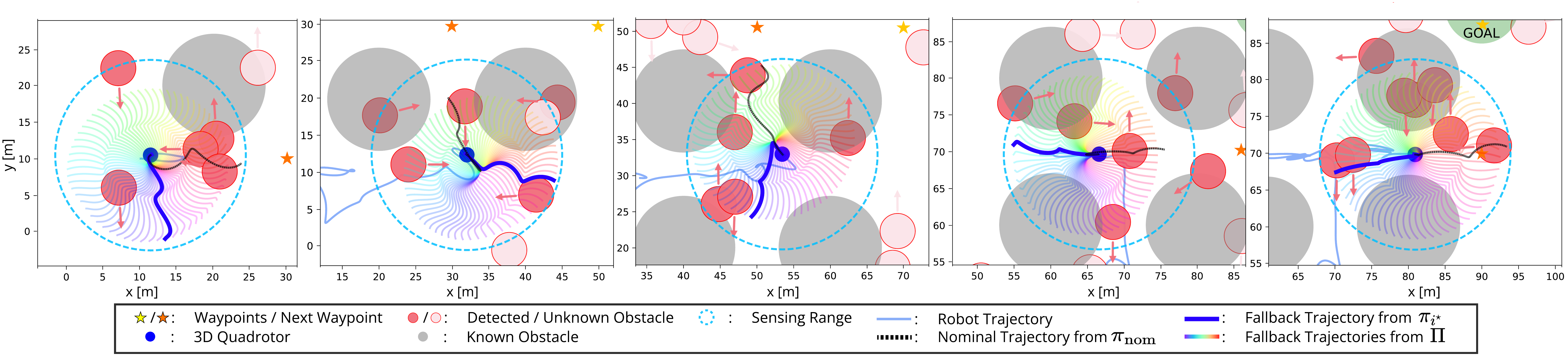}
    \caption{Warehouse navigation using PL-CBF with $P=64$. The robot tracks sequential waypoints (stars) while reacting to newly sensed dynamic obstacles. At each step, PL-CBF selects the least-restrictive fallback from its library (thick blue line) and applies the QP-based safety filter online (\autoref{alg:plcbf}).}
    \label{fig:warehouse_policy_selection}
\end{figure*}
%\vspace{-5pt}

\begin{table}[t]
\centering
\footnotesize
\setlength{\tabcolsep}{3.0pt}
\renewcommand{\arraystretch}{1.05}
\caption{Highway driving with sudden friction change. \textbf{Fail@10 m/s}: collision/infeasibility over 50 trials at $v_{\textup{ref}}=10$ m/s. $v_{\textup{ref}}^{\max}$: largest target speed that passes all benchmark configurations in an offline worst-case-clearance analysis at the friction transition. For MPS and \texttt{gatekeeper}, lane-change policies are not applicable backups due to their fundamental mechanisms of binary nominal/backup switching. \textbf{Time} is reported as the average per-step safety-filter solve time (averaged over fallback choices for single-fallback baselines).}
\label{tab:highway}
\begin{tabular}{l c | c | c |c}
\hline
\rule{0pt}{3ex} \textbf{Algorithm} & \textbf{Fallback} & \textbf{Fail@10 m/s} & $v_{\textup{ref}}^{\max}$ \textbf{[m/s]} & \textbf{Time [ms]} \\
\hline
\multirow{3}{*}{MPS~\cite{bastani_safe_2021}}
& $\pi_{\textup{stop}}$ & 40/50 (80.0\%) & 2.50 & \multirow{3}{*}{2.612} \\
& $\pi_{\textup{left}}$ & 36/50 (72.0\%) & N/A  &  \\
& $\pi_{\textup{right}}$& 43/50 (86.0\%) & N/A  &  \\
\hline
\multirow{3}{*}{\texttt{gatekeeper}~\cite{agrawal_gatekeeper_2024}}
& $\pi_{\textup{stop}}$ & 40/50 (80.0\%) & 2.75 & \multirow{3}{*}{4.143} \\
& $\pi_{\textup{left}}$ & 36/50 (72.0\%) & N/A  &  \\
& $\pi_{\textup{right}}$& 46/50 (92.0\%) & N/A  &  \\
\hline
\multirow{3}{*}{Backup CBF~\cite{chen_backup_2021}} 
& $\pi_{\textup{stop}}$      & 30/50 (60.0\%) & 2.75 & \multirow{3}{*}{24.789} \\
& $\pi_{\textup{left}}$  & 14/50 (28.0\%) & 5.00 &  \\
& $\pi_{\textup{right}}$ & 16/50 (32.0\%) & 5.00 &  \\
\hline
\multirow{3}{*}{PCBF~\cite{knoedler_safety_2025}}
& $\pi_{\textup{stop}}$ & 40/50 (80.0\%) & 2.75 & \multirow{3}{*}{5.286} \\
& $\pi_{\textup{left}}$ & 18/50 (36.0\%) & 4.50 &  \\
& $\pi_{\textup{right}}$& 11/50 (22.0\%) & 5.00 &  \\  
\hline
\rowcolor{gray!15} \rule{0pt}{3ex} \textbf{PL-CBF} & $\Pi$ & \textbf{0/50 (0.0\%)} & \textbf{10.00} & 7.522 \\
\hline
\end{tabular}
\end{table}

We next consider a highway driving scenario where the road friction $\mu$ is revealed online. We adopt a realistic vehicle model with a Fiala brush tire model~\cite{goh_stable_2024}, which has been experimentally demonstrated to accurately model vehicle behavior even in extreme situations such as racing and drifting~\cite{dallas_control_2025}. Including the global coordinates, the model has an 8-dim state and a 2-dim input:
\begin{equation}
\vx = [\ p_x \  p_y \ \psi \ r \ \beta \ 
V \ \delta \ \tau \ ]^\top,\qquad 
\vu=[\ \dot{\delta} \ \dot{\tau} \ ]^\top,
\end{equation}
where $(p_x,p_y)$ is the planar position of the center of mass~(CoM), $\psi$ is yaw, $r$ is yaw rate, $\beta$ is sideslip, $V$ is speed, $\delta$ is steering angle, and $\tau$ is rear-wheel torque. The kinematics and dynamics are
{\footnotesize
\begin{align}
\dot{p}_x &= V\cos(\psi+\beta), \quad
\dot{p}_y = V\sin(\psi+\beta), \quad
\dot{\psi} = r, \label{eq:bicycle_kin}\\
\dot{r} &= \frac{a ( F_{x,{\rm f}}\sin\delta + F_{y,{\rm f}}\cos\delta ) - b F_{y,{\rm r}}}{I_z}, \nonumber \\
\dot{\beta} &= \frac{F_{x, {\rm f}}\sin(\delta - \beta) + F_{y, {\rm f}}\cos(\delta - \beta) -F_{x, {\rm r}}\sin\beta + F_{y, {\rm r}}\cos\beta}{mV}-r, \nonumber\\
\dot{V} &= \frac{F_{x, {\rm f}}\cos(\delta - \beta) - F_{y, {\rm f}}\sin(\delta - \beta) + F_{x, {\rm r}}\cos\beta + F_{y, {\rm r}}\sin\beta}{m}, \nonumber
\end{align}
}
Slip angles are $\alpha_{\rm f} = \arctan\left(\frac{V \sin\beta+ar}{V \cos\beta}\right) -\delta ,\,
\alpha_{\rm r} = \arctan\left(\frac{V\sin\beta-br}{V\cos\beta}\right)$, where $a$ and $b$ are the distances between the CoM and front and rear axles. $m$ is the mass, and $I_z$ is the moment of inertia about CoM. Lateral forces are given by the Fiala brush model, where $C_{\rm c}$ denotes the cornering stiffness:
{\footnotesize
\begin{align}\label{eq:fiala}
    F_{y} =  
    \begin{cases}
    - C_{\rm c} \tan\alpha 
    + \frac{C_{\rm c}^2}{3F_{y, \max}} |\tan\alpha|\tan\alpha  
    & \hspace{-3mm} -  \frac{C_{\rm c}^3}{27 F_{y, {\max}}^2} \tan^3\alpha 
    \\
    &\quad {\rm if} \quad  |\alpha| < \alpha_\mathrm{sl},
    \\
    -F_{y, \max}\operatorname{sgn}(\alpha) & \quad {\rm if} \quad |\alpha| > \alpha_\mathrm{sl},
\end{cases}
\end{align}}
with $\alpha_\mathrm{sl} = {\rm atan}\big(\frac{3F_{y, \max}}{C_{\rm c}}\big)$ and $F_{y, \max} = \sqrt{(\mu F_z)^2-\gamma F_x^2}$, where $F_z$ and $F_x$ are normal and longitudinal force. $\mu$ is the road friction. Normal loads follow the static distribution $F_{z,{\rm f}}=\frac{mgb}{a+b}$ and $F_{z,{\rm r}}=\frac{mga}{a+b}$. To model longitudinal slip and friction-limited braking, we saturate the rear longitudinal force using a smooth friction-limited model:
\begin{equation}\label{eq:Fx_mu}
F_{x,{\rm f}}=0,\qquad
F_{x,{\rm r}} = (\mu F_{z,{\rm r}})\tanh \left(\frac{\tau}{r_{\rm w}(\mu F_{z,{\rm r}})}\right),
\end{equation}
where $r_{\rm w}$ is the wheel radius. Therefore, both longitudinal and lateral forces depend nonlinearly on the road friction $\mu$. In this experiment, the friction coefficient changes across road segments and both the controller and the safety filters use the currently available friction estimate online.

\textbf{Scenario.} The nominal controller is a Model Predictive Contouring Controller (MPCC)~\cite{lam_model_2010} that tracks the centerline of the current lane at target speed $v_{\textup{ref}}$. Obstacles are detected only within a limited sensing range, and the friction estimate $\mu$ is updated only after the vehicle enters the corresponding road segment. We place a black-ice patch with $\mu=0.25$ near the obstacles, while the remainder of the road has $\mu=1.0$. We randomly sample one or two obstacles within the low-friction patch, constrained to occupy distinct lanes of the three-lane roadway. The policy library is $\Pi=\{\pi_{\textup{stop}},\pi_{\textup{left}},\pi_{\textup{right}},\pi_{\textup{nom}}\}$, where $\pi_{\textup{stop}}$ applies maximum deceleration, $\pi_{\textup{left}}$ and $\pi_{\textup{right}}$ are tuned feedback lane-change controllers to the target lane, and $\pi_{\textup{nom}}$ is the MPCC. Under nominal friction and sufficient sensing, $\pi_{\textup{stop}}$ is a valid backup; however, when an obstacle lies on or beyond the low-friction segment, braking alone may no longer be valid under the input constraints and can lead to collision because the reduced friction envelope in \eqref{eq:fiala}--\eqref{eq:Fx_mu} limits the available tire forces.

\textbf{Results.} With $v_{\textup{ref}}=10$ m/s, PL-CBF achieves $0/50$ failures, whereas all single-fallback baselines incur collision or infeasibility (\autoref{tab:highway}). Although estimating changing dynamics quantities online is outside the scope of this paper and is left to future work, this experiment shows that PL-CBF has clear merit when the operating condition changes and a single backup maneuver can lose validity. To quantify how conservative a single fallback must be to remain uniformly safe, we report $v_{\textup{ref}}^{\max}$ from an offline worst-case-clearance analysis at the friction transition: PL-CBF remains safe up to the maximum tested $10$ m/s, while single-fallback baselines require substantially smaller reference speeds. A representative trial is shown in Fig.~\ref{fig:drift_traj}. Despite evaluating multiple policies, PL-CBF maintains low runtime latency of \textbf{7.5 ms per step}.

\subsection{3D Quadrotor Navigation in Crowded Environments}

%ignore these
% Instead, the provided code uses an Analytical Formula (based on the Irwin-Hall distribution logic).
% Mathematically, the volume of a hyper-rectangle cut by a linear constraint ($\mathbf{a}^T \mathbf{u} \leq b$) can be solved using an inclusion-exclusion formula:$$\text{Volume} = \frac{1}{n! \prod w_i} \sum_{k=0}^{2^n-1} (-1)^{|k|} \left( \max(0, b - \sum_{j \in k} w_j) \right)^n$$

\begin{table}[t]
\centering
\footnotesize
\setlength{\tabcolsep}{3pt}
\renewcommand{\arraystretch}{1.05}
\caption{3D quadrotor navigation (100 trials). Single-policy baselines use the same retrace fallback. PL-CBF ablation varies the number of evasive maneuvers $P\in\{4,8,16,32,64\}$ with headings uniformly spaced on $[0,2\pi)$, plus $\pi_{\textup{nom}}$ (thus $|\Pi|=P+1$). Compute time is reported as the average per-step safety-filter solve time.}
\label{tab:warehouse}
\begin{tabular}{l c c  c}
\hline
\textbf{Algorithm} & $P$ & \textbf{Collision/Infeasible} & \textbf{Compute Time [ms]} \\
\hline
MPS & -- & 77/100 (77.0\%) &  5.605 \\
gatekeeper & -- & 68/100 (68.0\%) &  32.257 \\
Backup CBF & -- & 74/100 (74.0\%) &  96.621 \\
PCBF & -- & 79/100 (79.0\%) &  15.474 \\
MI-MPC & 64 & 73/100 (73.0\%) &  688.230 \\
\hline
\rowcolor{gray!15} PL-CBF & 4  & 8/100 (8.0\%)  & 15.043 \\
\rowcolor{gray!15} PL-CBF & 8  & 6/100 (6.0\%)  & 15.503 \\
\rowcolor{gray!15} PL-CBF & 16 & 5/100 (5.0\%)  & 16.989 \\
\rowcolor{gray!15} PL-CBF & 32 & 3/100 (3.0\%)  & 19.886 \\
\rowcolor{gray!15} \textbf{PL-CBF} & 64 & \textbf{0/100 (0.0\%)}  & 27.986 \\
\hline
\end{tabular}
\end{table}

Finally, we evaluate a crowded warehouse navigation task with limited sensing and dynamic obstacles. The robot is tasked with visiting waypoints sequentially while avoiding both known static obstacles (gray) and newly detected dynamic obstacles (red). We use a linearized 3D quadrotor model with 12 states and 4 inputs~\cite{annaswamy_integration_2023}:
\begin{equation}
\vx = [\,x\ y\ z\ \dot{x}\ \dot{y}\ \dot{z}\ \phi\ \theta\ \psi\ \dot{\phi}\ \dot{\theta}\ \dot{\psi}\,]^\top, \
\vu = [\,\tau_{\phi}\ \tau_{\theta}\ \tau_{\psi}\ f_z\,]^\top, \nonumber
\end{equation}
with equations of motion
{\footnotesize
\begin{equation}
\ddot{x} = g\theta, \ \ddot{\theta} = \frac{L}{I_y}\tau_{\theta}, \ \ddot{y} = -g\phi, \ddot{\phi} = \frac{L}{I_x}\tau_{\phi},\
\ddot{z} = \frac{f_z-mg}{m}, \ \ddot{\psi} = \frac{1}{I_z}\tau_{\psi}. \nonumber
\end{equation}
}
The moments of inertia $I_x$, $I_y$, and $I_z$ are defined about the body-fixed axes. We impose realistic physical limits, including a thrust-to-weight ratio of $2.36$. This model is particularly challenging for classical CBF synthesis because (a) the state dimension is high ($\RealSpace^{12}$), and (b) for geometric constraints defined on position, the relative degree with respect to $\vu$ is four.
%\begin{align}
% \ddot{x} &= g\theta, \ \ddot{\theta} = \frac{L}{I_y}\tau_{\theta}, \ \ddot{y} = -g\phi, \nonumber \\ 
% \ddot{\phi} &= \frac{L}{I_x}\tau_{\phi},\
% \ddot{z} = \frac{f_z-mg}{m}, \ \ddot{\psi} = \frac{1}{I_z}\tau_{\psi}. \label{eq:quad_lin}
% \end{align}

\textbf{Scenario.} The environment contains $16$ known static obstacles and $45$ dynamic obstacles that are initially unobserved and become available only within the sensing range. We evaluate $100$ randomized trials by sampling the dynamic obstacles' initial positions and velocities. All single-policy baselines use the same retrace fallback that commands the robot toward the previous waypoint, a common heuristic in receding-horizon navigation. We ablate PL-CBF by varying the number of evasive maneuvers $P\in \{4,8,16,32,64\}$, implemented as PD controllers with heading directions uniformly spaced on $[0,2\pi)$, together with the nominal waypoint-tracking policy, yielding $|\Pi|=P+1$.

% \textbf{MI-MPC baseline.} To provide a comparison baseline that directly embeds the ``one-of-many'' (OR) policy-selection logic inside planning, we implement a mixed-integer MPC that selects one candidate rollout using binary variables and big-$M$ constraints. At each replanning step, given a set of $P$ precomputed rollouts $\{\bar{\vx}^{(i)}_k\}$ and per-rollout safety values $V_i$, the MI-MPC enforces
% \begin{equation}\label{eq:mi_mpc_main}
% \begin{aligned}
% \min_{\{\vx_k,\vu_k\},\,\vz}\ \ & J(\{\vx_k,\vu_k\}) \\
% \textrm{s.t.}\ \ &
% \vx_{k+1}=A_d\vx_k+B_d\vu_k,\ k=0,\dots,N-1,\\
% & \mathbf{1}^\top\vz = 1,\ \vz\in\{0,1\}^P,\\
% & V_i + M_h(1-z_i)\ge \eta,\ \forall i,\\
% & \|p_k-\bar{p}^{(i)}_k\|_\infty \le \rho_{xy}+M_{xy}(1-z_i),\ \forall i,k,
% \end{aligned}
% \end{equation}
% where $p_k$ denotes the (planar) position components of $\vx_k$. Full details are provided in Appendix~\ref{app:mip_mpc}.

\textbf{MI-MPC baseline.} As an optimization-based baseline that enforces a ``one-of-many" maneuver selection within planning, we implement a mixed-integer MPC~(MI-MPC) that selects exactly one candidate maneuver rollout using binary variables and mixed-integer linear constraints. We use the same maneuver set as PL-CBF with $P=64$ and follow a standard MI-MPC formulation as described in~\cite{borrelli_predictive_2017}.

\textbf{Results.} \autoref{tab:warehouse} reports failures and computation time. Single-policy baselines fail frequently, indicating that reliance on a single retrace maneuver can be overly restrictive in dense dynamic environments. Increasing the PL-CBF library size improves safety monotonically, consistent with richer maneuver coverage. Fig.~\ref{fig:warehouse_policy_selection} shows a representative behavior where PL-CBF selects a least-restrictive fallback online and solves the QP at each step. MI-MPC is substantially slower (688.2 ms), reflecting the cost of enforcing disjunctive constraints online.

\section{CONCLUSION}

% \todo{will write more general conclusion} Across all three benchmarks, PL-CBF consistently improves safety coverage relative to single-policy safety filters by avoiding dependence on a uniquely valid backup maneuver: it enlarges the recovered safe set in the double-integrator example, avoids black-ice-induced braking failures by selecting lane-change alternatives online, and reduces collisions in crowded warehouse navigation by leveraging a diverse library of evasive maneuvers under limited sensing and dynamic obstacles. Critically, these gains are achieved with millisecond-level runtime, supporting PL-CBF as a practical, adaptive safety filter in environments where the safe set and dynamics parameters are revealed only during operation.

We presented PL-CBF, a runtime safety filter for systems with evolving perceived safety constraints. PL-CBF retains multimodal safe maneuvers through a library of fallback policies, and selects the least invasive safe mode at each time step by evaluating the library online via parallelizable finite-horizon rollouts. The resulting structure makes PL-CBF practical as a deployable high-rate safety layer in unstructured environments where reliance on a single fixed fallback maneuver is brittle. Future work will focus on systematic policy library construction using learning-based approaches, handling of model mismatch and perception uncertainty, and adaptive mechanisms that refine the library online without sacrificing real-time performance.

\addtolength{\textheight}{0 cm}   % This command serves to balance the column lengths
                                  % on the last page of the document manually. It shortens
                                  % the textheight of the last page by a suitable amount.
                                  % This command does not take effect until the next page
                                  % so it should come on the page before the last. Make
                                  % sure that you do not shorten the textheight too much.

%%%%%%%%%%%%%%%%%%%%%%%%%%%%%%%%%%%%%%%%%%%%%%%%%%%%%%%%%%%%%%%%%%%%%%%%%%%%%%%%

%%%%%%%%%%%%%%%%%%%%%%%%%%%%%%%%%%%%%%%%%%%%%%%%%%%%%%%%%%%%%%%%%%%%%%%%%%%%%%%%

%%%%%%%%%%%%%%%%%%%%%%%%%%%%%%%%%%%%%%%%%%%%%%%%%%%%%%%%%%%%%%%%%%%%%%%%%%%%%%%%

% \section*{ACKNOWLEDGMENT}
% This work was supported by Agency for Defense Development.

%%%%%%%%%%%%%%%%%%%%%%%%%%%%%%%%%%%%%%%%%%%%%%%%%%%%%%%%%%%%%%%%%%%%%%%%%%%%%%%%
\bibliographystyle{IEEEtran}
\typeout{}
\bibliography{references.bib}

@inproceedings{kim_backupbased_2026,
	title = {Backup-{Based} {Safety} {Filters}: {A} {Comparative} {Review} of {Backup} {CBF}, {Model} {Predictive} {Shielding}, and gatekeeper},
	shorttitle = {Backup-{Based} {Safety} {Filters}},
	doi = {10.48550/arXiv.2604.02401},
	urldate = {2026-04-06},
	booktitle = {{arXiv} preprint {arXiv}:2604.02401},
	author = {Kim, Taekyung and Menon, Aswin D. and Trivedi, Akshunn and Panagou, Dimitra},
	year = {2026},
}

@inproceedings{yang_safe_2024,
	title = {Safe {Control} {Synthesis} for {Hybrid} {Systems} through {Local} {Control} {Barrier} {Functions}},
	issn = {2378-5861},
	doi = {10.23919/ACC60939.2024.10644200},
	urldate = {2026-03-04},
	booktitle = {American {Control} {Conference} ({ACC})},
	author = {Yang, Shuo and Black, Mitchell and Fainekos, Georgios and Hoxha, Bardh and Okamoto, Hideki and Mangharam, Rahul},
	year = {2024},
	pages = {344--351},
}

@article{goh_stable_2024,
	title = {Beyond the stable handling limits: nonlinear model predictive control for highly transient autonomous drifting},
	volume = {62},
	issn = {0042-3114},
	shorttitle = {Beyond the stable handling limits},
	doi = {10.1080/00423114.2023.2297799},
	number = {10},
	urldate = {2026-02-24},
	journal = {Vehicle System Dynamics},
	publisher = {Taylor \& Francis},
	author = {Goh, Jonathan Y. M. and Thompson, Michael and Dallas, James and Balachandran, Avinash},
	year = {2024},
	pages = {2590--2613},
}

@article{deng_verification_2016,
	title = {Verification of {Hybrid} {Automata} {Diagnosability} {With} {Measurement} {Uncertainty}},
	volume = {61},
	issn = {1558-2523},
	doi = {10.1109/TAC.2015.2455111},
	number = {4},
	urldate = {2026-02-17},
	journal = {IEEE Transactions on Automatic Control},
	author = {Deng, Yi and D'Innocenzo, Alessandro and Di Benedetto, Maria Domenica and Di Gennaro, Stefano and Julius, A. Agung},
	year = {2016},
	pages = {982--993},
}

@article{girard_approximation_2007,
	title = {Approximation {Metrics} for {Discrete} and {Continuous} {Systems}},
	volume = {52},
	issn = {1558-2523},
	doi = {10.1109/TAC.2007.895849},
	number = {5},
	urldate = {2026-02-14},
	journal = {IEEE Transactions on Automatic Control},
	author = {Girard, Antoine and Pappas, George J.},
	year = {2007},
	pages = {782--798},
}

@article{girard_approximate_2011,
	title = {Approximate {Bisimulation}: {A} {Bridge} {Between} {Computer} {Science} and {Control} {Theory}},
	volume = {17},
	issn = {0947-3580},
	shorttitle = {Approximate {Bisimulation}},
	doi = {10.3166/ejc.17.568-578},
	number = {5},
	urldate = {2026-02-14},
	journal = {European Journal of Control},
	author = {Girard, Antoine and Pappas, George J.},
	year = {2011},
	pages = {568--578},
}

@inproceedings{dallas_control_2025,
	title = {Control {Barrier} {Functions} for {Shared} {Control} and {Vehicle} {Safety}},
	doi = {10.23919/ACC63710.2025.11107628},
	urldate = {2025-12-17},
	booktitle = {American {Control} {Conference} ({ACC})},
	author = {Dallas, James and Talbot, John and Suminaka, Makoto and Thompson, Michael and Lew, Thomas and Orosz, Gábor and Subosits, John},
	year = {2025},
	pages = {4203--4210},
}

@inproceedings{park_collision_2026,
	title = {Beyond {Collision} {Cones}: {Dynamic} {Obstacle} {Avoidance} for {Nonholonomic} {Robots} via {Dynamic} {Parabolic} {Control} {Barrier} {Functions}},
	shorttitle = {{DPCBF}},
	doi = {10.48550/arXiv.2510.01402},
	urldate = {2025-12-15},
	booktitle = {International {Conference} on {Robotics} and {Automation} ({ICRA})},
	author = {Park, Hun Kuk and Kim, Taekyung and Panagou, Dimitra},
	year = {2026},
}

@article{altarovici_general_2013,
	title = {A general {Hamilton}-{Jacobi} framework for non-linear state-constrained control problems},
	volume = {19},
	copyright = {© EDP Sciences, SMAI, 2012},
	issn = {1292-8119, 1262-3377},
	doi = {10.1051/cocv/2012011},
	language = {en},
	number = {2},
	urldate = {2025-12-01},
	journal = {ESAIM: Control, Optimisation and Calculus of Variations},
	author = {Altarovici, Albert and Bokanowski, Olivier and Zidani, Hasnaa},
	year = {2013},
	pages = {337--357},
}

@inproceedings{bastani_safe_2021,
	title = {Safe {Reinforcement} {Learning} with {Nonlinear} {Dynamics} via {Model} {Predictive} {Shielding}},
	doi = {10.23919/ACC50511.2021.9483182},
	urldate = {2025-08-01},
	booktitle = {American {Control} {Conference} ({ACC})},
	author = {Bastani, Osbert},
	year = {2021},
	pages = {3488--3494},
}

@inproceedings{lam_model_2010,
	title = {Model predictive contouring control},
	doi = {10.1109/CDC.2010.5717042},
	urldate = {2025-07-11},
	booktitle = {{IEEE} {Conference} on {Decision} and {Control} ({CDC})},
	author = {Lam, Denise and Manzie, Chris and Good, Malcolm},
	year = {2010},
	pages = {6137--6142},
}

@article{annaswamy_integration_2023,
	title = {Integration of {Adaptive} {Control} and {Reinforcement} {Learning} for {Real}-{Time} {Control} and {Learning}},
	volume = {68},
	issn = {1558-2523},
	doi = {10.1109/TAC.2023.3290037},
	number = {12},
	urldate = {2025-06-02},
	journal = {IEEE Transactions on Automatic Control},
	author = {Annaswamy, Anuradha M. and Guha, Anubhav and Cui, Yingnan and Tang, Sunbochen and Fisher, Peter A. and Gaudio, Joseph E.},
	year = {2023},
	pages = {7740--7755},
}

@article{firoozi_oampc_2025,
	title = {{OA}-{MPC}: {Occlusion}-{Aware} {MPC} for {Guaranteed} {Safe} {Robot} {Navigation} {With} {Unseen} {Dynamic} {Obstacles}},
	volume = {33},
	issn = {1558-0865},
	shorttitle = {{OA}-{MPC}},
	doi = {10.1109/TCST.2024.3520462},
	number = {3},
	urldate = {2025-05-18},
	journal = {IEEE Transactions on Control Systems Technology},
	author = {Firoozi, Roya and Mir, Alexandre and Camps, Gadiel Sznaier and Schwager, Mac},
	year = {2025},
	pages = {940--951},
}

@article{kim_visibilityaware_2025,
	title = {Visibility-{Aware} {RRT}* for {Safety}-{Critical} {Navigation} of {Perception}-{Limited} {Robots} in {Unknown} {Environments}},
	volume = {10},
	doi = {10.48550/arXiv.2406.07728},
	number = {5},
	urldate = {2025-02-21},
	journal = {IEEE Robotics and Automation Letters},
	author = {Kim, Taekyung and Panagou, Dimitra},
	year = {2025},
	pages = {4508--4515},
}

@inproceedings{bansal_hamiltonjacobi_2017,
	title = {Hamilton-{Jacobi} reachability: {A} brief overview and recent advances},
	shorttitle = {{HJ} {Reachability} {Overview}},
	doi = {10.1109/CDC.2017.8263977},
	urldate = {2024-11-19},
	booktitle = {{IEEE} {Conference} on {Decision} and {Control} ({CDC})},
	author = {Bansal, Somil and Chen, Mo and Herbert, Sylvia and Tomlin, Claire J.},
	year = {2017},
	pages = {2242--2253},
}

@article{agrawal_gatekeeper_2024,
	title = {gatekeeper: {Online} {Safety} {Verification} and {Control} for {Nonlinear} {Systems} in {Dynamic} {Environments}},
	volume = {40},
	issn = {1941-0468},
	shorttitle = {gatekeeper},
	doi = {10.1109/TRO.2024.3454415},
	urldate = {2024-10-08},
	journal = {IEEE Transactions on Robotics},
	author = {Agrawal, Devansh Ramgopal and Chen, Ruichang and Panagou, Dimitra},
	year = {2024},
	pages = {4358--4375},
}

@inproceedings{glotfelter_boolean_2018,
	title = {Boolean {Composability} of {Constraints} and {Control} {Synthesis} for {Multi}-{Robot} {Systems} via {Nonsmooth} {Control} {Barrier} {Functions}},
	doi = {10.1109/CCTA.2018.8511471},
	urldate = {2024-10-01},
	booktitle = {{IEEE} {Conference} on {Control} {Technology} and {Applications} ({CCTA})},
	author = {Glotfelter, Paul and Cortés, Jorge and Egerstedt, Magnus},
	year = {2018},
	pages = {897--902},
}

@article{garg_advances_2024,
	title = {Advances in the {Theory} of {Control} {Barrier} {Functions}: {Addressing} practical challenges in safe control synthesis for autonomous and robotic systems},
	volume = {57},
	issn = {1367-5788},
	doi = {10.1016/j.arcontrol.2024.100945},
	urldate = {2024-09-06},
	journal = {Annual Reviews in Control},
	author = {Garg, Kunal and Usevitch, James and Breeden, Joseph and Black, Mitchell and Agrawal, Devansh and Parwana, Hardik and Panagou, Dimitra},
	year = {2024},
	pages = {100945},
}

@inproceedings{chen_backup_2021,
	title = {Backup {Control} {Barrier} {Functions}: {Formulation} and {Comparative} {Study}},
	shorttitle = {Backup {CBF}},
	doi = {10.1109/CDC45484.2021.9683111},
	urldate = {2024-07-02},
	booktitle = {{IEEE} {Conference} on {Decision} and {Control} ({CDC})},
	author = {Chen, Yuxiao and Jankovic, Mrdjan and Santillo, Mario and Ames, Aaron D.},
	year = {2021},
	pages = {6835--6841},
}

@book{camacho_model_2004,
	title = {Model {Predictive} {Control}},
	isbn = {978-1-85233-694-3},
	language = {en},
	publisher = {Springer London},
	author = {Camacho, Eduardo F. and Bordons, Carlos},
	year = {2004},
}

@book{borrelli_predictive_2017,
	title = {Predictive {Control} for {Linear} and {Hybrid} {Systems}},
	isbn = {978-1-107-01688-0},
	language = {en},
	publisher = {Cambridge University Press},
	author = {Borrelli, Francesco and Bemporad, Alberto and Morari, Manfred},
	year = {2017},
}

@book{re_automotive_2010,
	title = {Automotive {Model} {Predictive} {Control}: {Models}, {Methods} and {Applications}},
	isbn = {978-1-84996-071-7},
	shorttitle = {Automotive {Model} {Predictive} {Control}},
	language = {en},
	publisher = {Springer},
	author = {Re, Luigi Del and Allgöwer, Frank and Glielmo, Luigi and Guardiola, Carlos and Kolmanovsky, Ilya},
	year = {2010},
}

@inproceedings{so_how_2024,
	title = {How to {Train} {Your} {Neural} {Control} {Barrier} {Function}: {Learning} {Safety} {Filters} for {Complex} {Input}-{Constrained} {Systems}},
	shorttitle = {{PNCBF}},
	language = {en},
	urldate = {2024-01-09},
	booktitle = {{IEEE} {International} {Conference} on {Robotics} and {Automation} ({ICRA})},
	author = {So, Oswin and Serlin, Zachary and Mann, Makai and Gonzales, Jake and Rutledge, Kwesi and Roy, Nicholas and Fan, Chuchu},
	year = {2024},
	pages = {11532--11539},
}

@article{zhou_raptor_2021,
	title = {{RAPTOR}: {Robust} and {Perception}-{Aware} {Trajectory} {Replanning} for {Quadrotor} {Fast} {Flight}},
	volume = {37},
	issn = {1941-0468},
	shorttitle = {{RAPTOR}},
	doi = {10.1109/TRO.2021.3071527},
	number = {6},
	urldate = {2023-10-11},
	journal = {IEEE Transactions on Robotics},
	author = {Zhou, Boyu and Pan, Jie and Gao, Fei and Shen, Shaojie},
	year = {2021},
	pages = {1992--2009},
}

@article{spielberg_neural_2022,
	title = {Neural {Network} {Model} {Predictive} {Motion} {Control} {Applied} to {Automated} {Driving} {With} {Unknown} {Friction}},
	volume = {30},
	issn = {1558-0865},
	doi = {10.1109/TCST.2021.3130225},
	number = {5},
	journal = {IEEE Transactions on Control Systems Technology},
	author = {Spielberg, Nathan A. and Brown, Matthew and Gerdes, J. Christian},
	year = {2022},
	pages = {1934--1945},
}

@inproceedings{kim_physics_2022,
	title = {Physics {Embedded} {Neural} {Network} {Vehicle} {Model} and {Applications} in {Risk}-{Aware} {Autonomous} {Driving} {Using} {Latent} {Features}},
	doi = {10.1109/IROS47612.2022.9981303},
	booktitle = {{IEEE}/{RSJ} {International} {Conference} on {Intelligent} {Robots} and {Systems} ({IROS})},
	author = {Kim, Taekyung and Lee, Hojin and Lee, Wonsuk},
	year = {2022},
	pages = {4182--4189},
}

@inproceedings{ames_control_2019,
	title = {Control {Barrier} {Functions}: {Theory} and {Applications}},
	shorttitle = {{CBF}},
	doi = {10.23919/ECC.2019.8796030},
	booktitle = {European {Control} {Conference} ({ECC})},
	author = {Ames, Aaron D. and Coogan, Samuel and Egerstedt, Magnus and Notomista, Gennaro and Sreenath, Koushil and Tabuada, Paulo},
	year = {2019},
	pages = {3420--3431},
}

@misc{jax2018github,
	title = {{JAX}: composable transformations of {Python}+{NumPy} programs},
	author = {Bradbury, James and Frostig, Roy and Hawkins, Peter and Johnson, Matthew James and Leary, Chris and Maclaurin, Dougal and Necula, George and Paszke, Adam and VanderPlas, Jake and Wanderman-Milne, Skye and Zhang, Qiao},
	year = {2018},
}

@article{knoedler_safety_2025,
	title = {Safety on the {Fly}: {Constructing} {Robust} {Safety} {Filters} via {Policy} {Control} {Barrier} {Functions} at {Runtime}},
	volume = {10},
	issn = {2377-3766},
	shorttitle = {{RPCBF}},
	doi = {10.1109/LRA.2025.3597847},
	number = {10},
	urldate = {2025-09-23},
	journal = {IEEE Robotics and Automation Letters},
	author = {Knoedler, Luzia and So, Oswin and Yin, Ji and Black, Mitchell and Serlin, Zachary and Tsiotras, Panagiotis and Alonso-Mora, Javier and Fan, Chuchu},
	year = {2025},
	pages = {10058--10065},
}

\end{document}